\newcommand{\R}{\mathbb{R}}
\renewcommand{\P}{\mathbb{P}}
\newcommand{\E}{\mathbb{E}}
\newtheorem{assumption}[theorem]{Assumption}
\begin{document}

\title{Convergence of continuous-time stochastic gradient descent with applications to deep neural networks}

\author{\name Gabor Lugosi \email gabor.lugosi@upf.edu \\
       \addr Universitat Pompeu Fabra and Barcelona School of Economics\\ Department of Economics and Business \\
        Ram\'on Trias Fargas 25-27, 08005, Barcelona, Spain \\
       ICREA, Pg. Lluís Companys 23, 08010 Barcelona, Spain 
       \AND
       \name Eulalia Nualart\email eulalia.nualart@upf.edu \\
       \addr Universitat Pompeu Fabra and Barcelona School of Economics \\ Department of Economics and Business \\
        Ram\'on Trias Fargas 25-27, 08005, Barcelona, Spain }

\editor{}

\maketitle

\begin{abstract}
We study a continuous-time approximation of the stochastic gradient descent process for
minimizing the population expected loss in learning problems. The main results establish general sufficient
conditions for the convergence, extending the results of \cite{C22} established for (nonstochastic)
gradient descent. We show how the main result can be applied to the case of overparametrized 
neural network training.
\end{abstract}

\begin{keywords}
  stochastic gradient descent, neural networks, Langevin stochastic differential equation
\end{keywords}

\section{Introduction}
 
Stochastic gradient descent ({\sc sgd}) is a simple yet remarkably powerful optimization method
that has been widely used in machine learning, most notably in the training
of large neural networks. Indeed, {\sc sgd} has played a central role in the spectacular
success of deep learning. Despite its importance, the method remains far from fully understood,
and significant effort has been devoted to explaining why large neural networks trained
by stochastic gradient descent learn so efficiently and generalize so well. 

We now describe a general setup that encompasses a broad class
of problems in machine learning. 
Let $\ell:\R^D \times \R^d\rightarrow [0, \infty)$ be a \emph{loss
  function} that assigns a nonnegative value to any pair $(w,z)$, where
$w\in \R^D$ is a \emph{parameter} to be learned and $z\in \R^d$ is an
\emph{observation}. We assume throughout that $\ell$ is twice
continuously differentiable in its first argument.
Let $Z$ be a random vector taking values in $\R^d$. The goal is to
minimize the population expected loss (or \emph{population risk}) $f(w)=\E[\ell(w,Z)]$ over
$w\in \R^D$. To this end, one has access to training data in the form
of a sequence $Z_0,Z_1,Z_2,\ldots$ of independent, identically distributed copies of $Z$.

Stochastic gradient descent ({\sc sgd}) is the iterative
optimization algorithm defined by an arbitrary initial value $w_0 \in
\R^D$ and a \emph{step size} $\eta>0$, which updates for $k=0,1,2,\ldots$ as
\begin{equation} \label{w0}
w_{k+1}=w_k-\eta \nabla \ell(w_k, Z_k)~,
\end{equation} 
where $\nabla$ denotes the derivative with respect to $w$. Clearly,
$$
\E[\nabla \ell(w_k, Z_k) \mid w_k]=\nabla f(w_k)~.
$$
In this paper, we study a continuous-time approximation of the stochastic gradient descent
process. 
Several approximations have been proposed in the literature. We follow
the model introduced by 
\cite{P20}, 
which approximates the {\sc sgd} recursion (\ref{w0}) by the Langevin-type
continuous-time stochastic differential equation ({\sc sde})
\begin{equation} \label{w2}
dw_t=-\nabla f(w_t) \, dt+\sqrt{\eta} \, \sigma(w_t) \, dB_t~, 
\end{equation}
for $t\ge 0$,
 where $w_0 \in \R^D$, $B_t$ is a $D$-dimensional Brownian motion,  $\eta>0$ is a fixed parameter that acts as the variance of the noise term, and $\sigma: \R^D \rightarrow \R^D \times \R^D$ is a $D\times D$ matrix
defined as the unique square root of the covariance matrix $\Sigma(w)= \text{Cov}(\nabla \ell(w,Z))$ of the random vector $\nabla \ell(w,Z)$, that is,
\begin{equation*} \begin{split}
\sigma(w)(\sigma(w))^\top= \Sigma(w)~.
\end{split}
\end{equation*}
For the heuristics behind the approximation of the discrete-time process (\ref{w0}) by  (\ref{w2}), we refer the reader to \cite{P20}.
We investigate convergence properties of \eqref{w2}, as $t \rightarrow \infty$, for functions $f:\R^D\to [0,\infty)$ and $\sigma:\R^D\to S_+^D$ defined via a loss function as above, \textcolor{black}{where $S_+^D$ is defined in Subsection \ref{sub:not} below.}

General sufficient conditions for convergence of the ``noiseless'' process---corresponding to $\eta=0$ in (\ref{w2})---to a global minimum of $f$
were established by \cite{C22}. While the behavior of gradient descent is well understood 
when $f$ is convex (\cite{Nes13}), 
Chatterjee's conditions extend significantly beyond convexity.
The main goal of this paper is to extend Chatterjee's results
to the stochastic model (\ref{w2}). The presence of noise introduces new challenges, and addressing
these is our main contribution. \textcolor{black}{It is important to highlight that in this work we study minimization of the \emph{population risk} $f(w)=\E[\ell(w,Z)]$, rather than its empirical counterpart. It is the population risk that is relevant for the performance of the learning algorithm, as, in general, a small empirical risk does not imply good generalization.}

The rest of the paper is organized as follows. 
\textcolor{black}{In Section \ref{sec:prel} we introduce the main assumptions, notation, and elements of stochastic calculus that are relevant for our techniques.}
In Section \ref{sec:main} we present the main result of the paper. In particular, Theorem \ref{t2} shows that, under Chatterjee's conditions,
together with additional assumptions on the noise $\sigma(\cdot)$, if the process is initialized sufficiently close to a global minimum, then,
with high probability, the trajectory $w_t$ converges to the set of global minima of $f$.
In Section \ref{sec:lit} we review related literature. 
In Section \ref{sec:app} we illustrate how the main result can be applied to the training of overparameterized 
neural networks.
All proofs are collected in Section \ref{sec:proofs}.

\section{Preliminaries and assumptions} \label{sec:prel}
\subsection{Notation} \label{sub:not}
$\Vert \cdot \Vert$ denotes the Euclidean norm 
in both $\R^D$ and $\R^d$. All random variables are defined on a complete probability space $(\Omega, \mathcal{F}, \P)$, and we denote by
$\E[\cdot]$ the expectation with respect to $\P$. We let $(\mathcal{F}_t)_{t \geq 0}$
be the minimal augmented filtration generated by the $D$-dimensional Brownian motion $(B_t)_{t \geq 0}$, satisfying the usual conditions.
For any integer $k \geq 1$, we denote by $\mathcal{C}^2(\R^k)$ the set of
twice continuously differentiable functions $g:\R^k \rightarrow \R$.
If $g \in \mathcal{C}^2(\R^D)$,
we write $\nabla g(w)$ for its gradient and 
$H g(w)$ for its $D\times D$ Hessian matrix.
We denote by $B_{r}(w)\subset \mathbb{R}^D$ the closed Euclidean ball of radius $r>0$ centered at $w$.
For any square matrix $M$, we write $\text{Tr}(M)$ for its trace, and $\lambda_{\min}(M)$ and $\lambda_{\max}(M)$ for its smallest and largest eigenvalues, respectively. For any matrix $M$, we denote by $M^\top$ its transpose. 
If $M$ is a $D \times D$ matrix, then $M_1,\ldots,M_D$ denote its column vectors. We let $S_+^D$ denote the set 
of positive definite $D\times D$ matrices. We say that a function $g:\R^D \rightarrow \R^D$ is locally Lipschitz continuous if, for any compact set $K\subset \R^D$, there exists a constant $\textnormal{Lip}(g,K)>0$ such that for all $x,y \in \R^D$,
\begin{equation} \label{lip}
\Vert \nabla g (x)- \nabla g(y) \Vert \leq \textnormal{Lip}(g,K)  \Vert x-y \Vert.
\end{equation}
If $a,b \in \R$, we set $a \wedge b:=\min\{a,b\}$.

\subsection{Assumptions}
In this subsection we state the key assumptions needed to obtain convergence of the process (\ref{w2}) as $t \rightarrow \infty$ to
a global minimizer of the function $f$.

Our first assumption is a regularity condition on the function $f$, namely a ``locally Lipschitz'' condition for $\nabla f$, \textcolor{black}{whose definition is given in (\ref{lip})}. 
This mild assumption guarantees that equation (\ref{w2}) admits a unique local solution, as explained below.
It is important
to emphasize that we do not require $\nabla f$ to be globally Lipschitz continuous, since this would exclude some important applications 
in machine learning.
\begin{assumption} \label{a1}
The functions $\nabla f, \sigma_1,\ldots, \sigma_d: \R^D \rightarrow \R^D$
are locally Lipschitz continuous.
\end{assumption}

Under Assumption \ref{a1}, it is well known (see, e.g., \cite[Theorem 2.8, page 154]{Mao1}, \cite{Mao2}) that 
for any initialization $w_0\in \R^D$,
there exists a unique maximal local solution to equation (\ref{w2}) up to its (random) blow-up time  
\[
T:=T(w_0) = \sup\{ t>0 : \| w_t \| < \infty \}.
\]
This means that there exists a unique continuous $\mathcal{F}_t$-adapted Markov process $(w_t)_{t\geq 0}$ satisfying the integral equation
\[
w_t = w_0 - \int_0^t \nabla f(w_s) \, ds + \sqrt{\eta} \int_0^t \sigma(w_s) \, dB_s~,
\]
for all $t <T$ a.s., where the stochastic integral is understood in the It\^o sense.
Moreover, if $T<\infty$, then
\[
\limsup_{t \rightarrow T} \Vert w_t \Vert = \infty~.
\]

We now introduce our second assumption.  
Recall that our main goal is to derive sufficient conditions on the function $f$ under which the solution $w_t$ converges to a point where 
$f$ attains its minimum. An obvious necessary condition for convergence is that the norm of $\sigma(w)$ tends to zero
as $w$ approaches the set of minimizers. In other words, we assume that $f$ reaches its minimum value, which we normalize to be zero:

\textcolor{black}{
\begin{assumption} \label{a1B}
There exists $w \in \R^D$ such that $f(w)=0$.
\end{assumption}
Since $f(w)=\E [\ell(w,Z)]$ and $\ell$ is non-negative,
Assumption \ref{a1B} is equivalent to the interpolation assumption
$$
\text{there exists } w\in \R^D \text{ such that } 
\ell(w,Z)=0 \quad \text{almost surely.}$$ }

\textcolor{black}{
In many machine learning applications, it is natural and reasonable to assume that the learning problem is ``noiseless'', and the hypothesis class is sufficiently rich. Under such circumstances,  Assumption \ref{a1B} holds.
}

An immediate and simple consequence of Assumption \ref{a1B} is that if $f$ attains its minimum value at a finite time, 
then the solution of the process remains at that point forever, almost surely:

\begin{lemma} \label{l1}
Consider the {\sc sde} \textnormal{(\ref{w2})} initialized at some $w_0 \in \R^D$, and
suppose that Assumptions \ref{a1} and \ref{a1B} hold.
If for some $t \in [0,T)$ we have $f(w_t)=0$, then $T=\infty$ and for all $s > t$, $w_s = w_t$.
\end{lemma}

\subsection{Preliminaries on It\^o's stochastic calculus} \label{sec:pre}

In this subsection we introduce some important notation together with preliminary lemmas 
from stochastic calculus that play a key role in formulating and proving our convergence result. 
The main tool is the theory of It\^o's stochastic integration; see, for instance, the monograph by \cite{Mao1} for an introduction to this topic. 
We begin by recalling the multi-dimensional It\^o formula, which can be found in Theorem 6.4, page 36, of this monograph.

\begin{theorem}[Multi-dimensional It\^o formula] \label{itofor}
Let $x_t$ be a $D$-dimensional It\^o process defined up to an $\mathcal{F}_t$-stopping time $\rho$. 
That is, $x_0 \in \R^D$ and $x_t$ satisfies the stochastic differential equation
\begin{equation*}
dx_t=u_t dt+ v_t dB_t, \quad \text{a.s. for all }  0\leq t<\rho,
\end{equation*}
where $u_t$ is an $\R^D$-valued measurable $\mathcal{F}_t$-adapted process defined a.s. for all $0\leq t<\rho$ such that
\begin{equation} \label{h1}
\int_0^{\rho} \Vert u_t \Vert dt <\infty  \quad \text{a.s.},
\end{equation}
and
$v_t$ is an $\R^D \times \R^D$-valued measurable $\mathcal{F}_t$-adapted process defined a.s. for all $0\leq t<\rho$ such that
\begin{equation} \label{h2}
\E\left[\int_0^{\rho} \textnormal{Tr}(v_t^\top v_t)dt \right]<\infty.
\end{equation}
Let $V \in \mathcal{C}^2(\R^D)$.  
Then $V(x_t)$ is an It\^o process defined a.s. for all $0\leq t<\rho$ with stochastic differential
\begin{equation}\label{if} \begin{split}
dV(x_t)=\left((\nabla V(x_t))^\top u_t+\frac12 \textnormal{Tr}(v_t^\top HV(x_t) v_t)\right) dt
+(\nabla V (x_t))^\top v_t dB_t,
\end{split}
\end{equation}
a.s. for all $0\leq t<\rho$.
\end{theorem}
\begin{remark} 
Although \cite[Theorem 6.4, page 36]{Mao1} is stated only in the case $\rho=\infty$ a.s., 
the result holds a.s. for all $t \geq 0$. Hence, for any fixed $\omega \in \Omega$, the statement is valid for all $t \geq 0$, 
and in particular it also applies to an $\mathcal{F}_t$-stopped process. 
\end{remark}

By Assumption \ref{a1}, the solution to our {\sc sde} \eqref{w2} is an It\^o process up to its blow-up time $T$, and therefore exists only locally. 
Moreover, by Assumption \ref{a1B} and Lemma \ref{l1}, we restrict attention to the case where $f(w_t)$ does not reach its minimum value (zero) in finite time. 
With this in mind, we define the two $\mathcal{F}_t$-stopping times
\[
\tau_r:=\tau_r(w_0) = \inf\{ t>0 : w_t \notin B_r(w_0)\}, 
\qquad
\tau:=\tau(w_0) = \inf\{ t>0 : f(w_t)=0 \}.
\]
That is, $\tau_r$ is the first time the process leaves the ball of radius $r$ around its initial point, 
and $\tau$ is the first time $f(w_t)$ attains its minimum value.

A first key step in proving convergence of $w_t$ is to study the local stability of $f(w_t)$ 
by adapting the theory of Lyapunov exponents developed in \cite[Chapter 2]{Mao1} to our setting.
To this end, we apply the multi-dimensional It\^o formula to the stopped process 
$\log f(w_{t\wedge \tau_r \wedge \tau})$; see the proof of Lemma \ref{p1} below for details. 
Specifically, applying formula (\ref{if}) with $V=\log f$ and $x_t=w_{t \wedge \tau_r \wedge \tau}$ yields, in integral form,
\begin{equation} \label{md}
\log f(w_{t \wedge \tau_r \wedge \tau})
   =\log f(w_0)-\int_0^{t \wedge \tau_r \wedge \tau} (a (w_s)-\eta g(w_s))  ds +M_{t}-\frac{1}{2} \langle M \rangle_{t},
\end{equation}
where $(M_{t})_{t \geq 0}$ is the stopped $\mathcal{F}_t$-martingale
\begin{equation*} 
M_{t}:= \sqrt{\eta}\int_0^{t \wedge \tau_r \wedge \tau} \frac{(\nabla f(w_s))^\top \sigma(w_s)}{f(w_s)}   dB_s,
\end{equation*}
with quadratic variation (see \cite[Theorem 5.21, page 28]{Mao1}) given by
$$
\langle M \rangle_{t}= \eta\int_0^{t \wedge \tau_r \wedge \tau} 
\frac{\textnormal{Tr}\left((\sigma(w_s))^\top \nabla f(w_s) 
(\nabla f(w_s))^\top \sigma(w_s)\right) }{f^2(w_s)} ds.
$$
For $w \in \R^d$, we set
\begin{equation} \label{eq:a}
 a(w):=\frac{ \Vert \nabla f(w) \Vert^2}{f(w)}, 
 \qquad
 g(w):=\frac{ \textnormal{Tr}((\sigma(w))^\top Hf(w)\sigma(w))}{2f(w)}.
\end{equation}

To upper bound the right-hand side of (\ref{md}), we define
\begin{equation} \label{eq:A}
A_{\min}(r,w_0):=\inf_{w \in B_{r}(w_0), f(w) \neq 0} a(w),
\qquad
G_{\max}(r,w_0):=\sup_{w \in B_r(w_0), f(w) \neq 0} g(w).
\end{equation}
If $f(w)=0$ for all $w \in B_r(w_0)$, we set $A_{\min}(r,w_0)=\infty$.
For $\eta \geq 0$, we also define 
$$\theta(r,w_0,\eta):= A_{\min}(r,w_0)-\eta G_{\max}(r,w_0).$$

We then obtain the following two results, whose proofs are postponed to Section \ref{sec:proofs}. 
The first provides a local exponential upper bound for $f$, while the second shows that 
if $f$ does not reach its minimum in finite time, then $f$ decays exponentially to zero at infinity.
\begin{lemma} \label{p1}
Consider the {\sc sde} \textnormal{(\ref{w2})} initialized at some $w_0 \in \R^D$, and
suppose that Assumptions \ref{a1} and \ref{a1B} hold. Then, for all $r>0$ and $\eta \geq 0$, almost surely for all $t>0$,
\begin{equation*} \begin{split}
f(w_{t \wedge \tau_r \wedge \tau})\leq f(w_0)e^{-(t \wedge \tau_r \wedge \tau)\theta(r,w_0,\eta) } 
e^{M_{t}-\frac{1}{2}\langle M \rangle_{t}}.
\end{split}
\end{equation*}
\end{lemma}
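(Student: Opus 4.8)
The plan is to apply Itô's formula to $\log f(w_t)$ and to recognize the claimed bound as the exponentiated form of the resulting semimartingale decomposition, with the drift controlled by $\theta(r,w_0,\eta)$. First I would compute $df(w_t)$ by the multidimensional Itô formula. The first-order term is $\nabla f(w_t)^T\,dw_t = -\Vert\nabla f(w_t)\Vert^2\,dt + \sqrt{\eta}\,\nabla f(w_t)^T\sigma(w_t)\,dB_t$, and since the quadratic covariation of the coordinates of $w_t$ is $\eta\,\sigma(w_t)\sigma^T(w_t)\,dt$, the second-order term is $\tfrac{\eta}{2}\,\Tr(\sigma^T(w_t)Hf(w_t)\sigma(w_t))\,dt$ (using the cyclic property of the trace). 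Writing $\Vert\nabla f\Vert^2 = a(w)f(w)$ and $\Tr(\sigma^T Hf\sigma) = 2g(w)f(w)$ from the definitions of $a$ and $g$ gives
\[
df(w_t) = \bigl(-a(w_t)+\eta\,g(w_t)\bigr)f(w_t)\,dt + \sqrt{\eta}\,\nabla f(w_t)^T\sigma(w_t)\,dB_t.
\]

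Next I would pass to $Y_t = \log f(w_t)$, which is $C^2$ on the open set $\{f>0\}$, so that Itô's formula gives $dY_t = df(w_t)/f(w_t) - \tfrac12\,d\langle f(w)\rangle_t/f^2(w_t)$. Dividing the martingale part of $df$ by $f$ reproduces exactly the integrand defining $M_t$ in \eqref{m}, so the first term equals $(-a(w_t)+\eta\,g(w_t))\,dt + dM_t$; and since the quadratic variation of the martingale part of $f$ is $\eta\,\nabla f^T\sigma\sigma^T\nabla f\,dt$, dividing by $f^2$ reproduces the integrand of $\langle M\rangle_t$, so the correction term is $-\tfrac12\,d\langle M\rangle_t$. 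Integrating and exponentiating yields, for $t<\tau$,
\[
f(w_t) = f(w_0)\exp\Bigl(\int_0^t\bigl(-a(w_s)+\eta\,g(w_s)\bigr)\,ds\Bigr)\exp\Bigl(M_t - \tfrac12\langle M\rangle_t\Bigr).
\]
For $s < \tau_r\wedge\tau$ one has $w_s\in B_r(w_0)$ and $f(w_s)\neq 0$, hence $a(w_s)\ge A_{\min}(r,w_0)$ and $g(w_s)\le G_{\max}(r,w_0)$, so the integrand is at most $-\theta(r,w_0,\eta)$; bounding the drift integral at time $t\wedge\tau_r\wedge\tau$ by $-\theta(r,w_0,\eta)(t\wedge\tau_r\wedge\tau)$ and using monotonicity of the exponential gives the claimed inequality.

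The main obstacle is rigor near $\tau$, where $1/f(w_t)$ and $1/f^2(w_t)$ blow up, so $\log f$ is not globally $C^2$ and the displayed Itô computation is only legitimate while $f$ is bounded away from zero. I would make it rigorous by localizing: set $\theta_n := \inf\{t>0 : f(w_t)\le 1/n\}$ and run the argument on $[0,\,t\wedge\tau_r\wedge\theta_n]$, where $f\ge 1/n$ and, by Assumption \ref{a1} together with continuity, $\nabla f$ and $\sigma$ are bounded on the compact ball $B_r(w_0)$; there all integrands are well defined, the stopped integral is a genuine martingale, and the inequality holds with $\tau$ replaced by $\theta_n$. Since $\{f\le 1/(n+1)\}\subseteq\{f\le 1/n\}$, the times $\theta_n$ increase, and by continuity of $s\mapsto f(w_s)$ one checks that $\theta_n\uparrow\tau$. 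On the event $\{t\wedge\tau_r<\tau\}$ one has $\theta_n>t\wedge\tau_r$ for $n$ large, so $t\wedge\tau_r\wedge\theta_n = t\wedge\tau_r\wedge\tau$ and the desired inequality follows directly; on the complementary event $t\wedge\tau_r\wedge\tau=\tau$, so $f(w_{t\wedge\tau_r\wedge\tau})=0$ and the inequality holds trivially because its right-hand side is nonnegative. Note that $b(w)$ and condition \eqref{bmax} play no role in this lemma; they enter only later, in controlling the exponential martingale $\mathcal{E}_t$ when proving Theorem \ref{t2}.
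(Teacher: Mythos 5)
Your proposal is correct and follows essentially the same route as the paper: apply the multidimensional It\^o formula to $\log f$ of the stopped process, identify the martingale part with $M_t$ and the correction with $-\tfrac12\langle M\rangle_t$, bound the drift integrand by $-\theta(r,w_0,\eta)$ on $B_r(w_0)\cap\{f\neq 0\}$, and exponentiate. The only difference is that you spell out the localization via $\theta_n\uparrow\tau$ to justify applying It\^o to $\log f$ near the zero set of $f$, a point the paper's proof leaves implicit.
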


\begin{lemma} \label{t1}
Consider the {\sc sde} \textnormal{(\ref{w2})} initialized at some $w_0 \in \R^D$, and
suppose that Assumptions \ref{a1} and \ref{a1B} hold. Then, for all $r>0$ and $\eta \geq 0$, almost surely on the event $\{\tau_r \wedge \tau =\infty\}$, 
$$
\limsup_{t \rightarrow \infty} \frac{\log f(w_{t})}{t}\leq -\theta(r,w_0,\eta).
$$
\end{lemma}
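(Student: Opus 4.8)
The plan is to combine the pathwise bound from Lemma \ref{p1} with an almost-sure control of the exponential-martingale exponent. On the event $\{\tau_r\wedge\tau=\infty\}$ the stopping in Lemma \ref{p1} is trivial, so that almost surely, for every $t>0$,
$$
\log f(w_t)\le \log f(w_0)-t\,\theta(r,w_0,\eta)+N_t, \qquad N_t:=M_{t}-\tfrac12\langle M\rangle_{t}.
$$
Dividing by $t$ and letting $t\to\infty$, the statement reduces to showing that $\limsup_{t\to\infty} N_t/t\le 0$ almost surely, since the first two terms contribute $-\theta(r,w_0,\eta)$ in the limit.

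To make this rigorous despite the fact that the integrand defining $M$ in \eqref{m} may be singular as $f(w_s)\to 0$ or as the solution approaches its blow-up time, I would first pass to the stopped process $\widetilde M_t:=M_{t\wedge\tau_r\wedge\tau}$, with $\langle\widetilde M\rangle_t=\langle M\rangle_{t\wedge\tau_r\wedge\tau}$, which is a well-defined continuous local martingale for all $t\ge 0$ and agrees with $M$ on the event in question. For each $\alpha>0$ the process $e^{\alpha \widetilde M_t-\frac{\alpha^2}{2}\langle\widetilde M\rangle_t}$ is a nonnegative local martingale, hence a supermartingale of initial value $1$; this is the object already exploited in \eqref{eq:martingale}.

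Next I would invoke the exponential martingale inequality, obtained by applying Doob's maximal inequality to this supermartingale with $\alpha=1$: for every $T,\beta>0$,
$$
\P\left(\sup_{0\le t\le T}\Big(\widetilde M_t-\tfrac12\langle\widetilde M\rangle_t\Big)>\beta\right)\le e^{-\beta}.
$$
Fixing $\epsilon>0$ and taking $T=n$, $\beta=\epsilon n$ gives the summable bound $e^{-\epsilon n}$, so Borel--Cantelli yields that almost surely, for all large $n$, one has $\sup_{0\le t\le n}(\widetilde M_t-\frac12\langle\widetilde M\rangle_t)\le \epsilon n$. Evaluating this at $n=\lfloor t\rfloor+1$ shows $\widetilde N_t/t\le \epsilon(1+1/t)$ for all large $t$, where $\widetilde N_t:=\widetilde M_t-\frac12\langle\widetilde M\rangle_t$; since $\epsilon$ is arbitrary, $\limsup_{t\to\infty}\widetilde N_t/t\le 0$ almost surely. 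Because this is an unconditional almost-sure statement and $\widetilde N_t=N_t$ on $\{\tau_r\wedge\tau=\infty\}$, combining it with the displayed bound gives $\limsup_{t\to\infty}\frac{\log f(w_t)}{t}\le -\theta(r,w_0,\eta)$ on that event.

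The main obstacle is the first, measure-theoretic point: the driving local martingale $M$ is only intrinsically defined up to $\tau_r\wedge\tau$, so one must argue entirely with the stopped process and only then transfer the conclusion to the conditioning event, rather than manipulating $M_t/t$ directly. A cleaner alternative to the Borel--Cantelli step is to cite the strong law for continuous local martingales applied to $\widetilde M$: on $\{\langle\widetilde M\rangle_\infty<\infty\}$ the martingale converges so $\widetilde N_t/t\to 0$, while on $\{\langle\widetilde M\rangle_\infty=\infty\}$ one has $\widetilde M_t/\langle\widetilde M\rangle_t\to 0$, forcing $\widetilde N_t\to-\infty$ and hence $\limsup_{t\to\infty}\widetilde N_t/t\le 0$ in either case.
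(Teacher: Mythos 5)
Your proof is correct and follows essentially the same route as the paper: the pathwise bound from Lemma \ref{p1}, the exponential martingale inequality $\P\bigl(\sup_{t\le n}(M_t-\tfrac12\langle M\rangle_t)>x\bigr)\le e^{-x}$ (which the paper cites from Mao), and a Borel--Cantelli argument with $x=\epsilon n$. Your additional care in working with the stopped local martingale before transferring to the event $\{\tau_r\wedge\tau=\infty\}$ is a welcome tightening of a point the paper leaves implicit.
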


Recall from \cite[Chapter 2]{Mao1} that the quantity 
$\limsup_{t \rightarrow \infty} \frac{\log f(w_{t})}{t}$ is called the Lyapunov exponent of the process $f(w_{t})$.

A second key step in proving convergence of the {\sc sde} \textnormal{(\ref{w2})}
is to control locally the quadratic variation of the It\^o integral, given by
$$
\E\left[  \int_{0}^{t \wedge \tau_r \wedge \tau} \text{Tr}(\sigma(w_s)^\top\sigma(w_s)) ds  \right].
$$
To this end, we multiply and divide the integrand by $f(w_s)$ and use Lemma \ref{p1}. 
This motivates bounding the function
\begin{equation} \label{eq:b}
b(w):=\frac{ \textnormal{Tr}(\sigma(w)^\top\sigma(w))}{4f(w)}, \quad w \in \R^D,
\end{equation}
and we set
\begin{equation} \label{eq:B}
B_{\max}(r,w_0):=\sup_{w \in B_{r}(w_0), f(w) \neq 0} b(w).
\end{equation}

\textcolor{black}{
Finally, the last key step is to consider the stopped process 
$$\mathcal{E}_{t}=e^{c M_{t}-\frac12 c^2 \langle M \rangle_{t}}, \qquad c \in \R, \; t \geq 0.$$
This process is known as the exponential martingale, as justified by the following lemma, 
whose proof is deferred to Section \ref{sec:proofs}.
\begin{lemma} \label{mart}
Consider the {\sc sde} \textnormal{(\ref{w2})} initialized at some $w_0 \in \R^D$, and
suppose that Assumptions \ref{a1} and \ref{a1B} hold.
Then the process $(\mathcal{E}_{t})_{t \geq 0}$ is a nonnegative $\mathcal{F}_t$-martingale. 
\end{lemma}
As a consequence of Lemma \ref{mart}, and since $\mathcal{E}_0=1$, it follows that
for all $t \geq 0$,
\begin{equation} \label{eq:martingale}
\E[\mathcal{E}_{t}]=1.
\end{equation}}

\section{Convergence of the continuous-time {\sc sgd}}
\label{sec:main}

The following theorem provides sufficient conditions for convergence of the {\sc sde} (\ref{w2}) to a minimum of $f$, with positive probability, 
and also establishes an estimate for the rate of convergence.
More precisely, the theorem shows that if the process is initialized in a sufficiently small neighborhood of a global minimum
of $f$ and the noise parameter $\eta$ is sufficiently small, then the process converges to a minimum of $f$
with positive probability. 

We define the set of global minima of $f$ as
\begin{equation} \label{eq:s}
\mathcal{S}=\{w\in \R^D: f(w)=0\},
\end{equation}
which is non-empty by Assumption \ref{a1B}.

\begin{theorem} \label{t2}
Consider the {\sc sde} \textnormal{(\ref{w2})} initialized at some $w_0 \in \R^D$, and
suppose that Assumptions \ref{a1} and \ref{a1B} hold.
Assume that there exist $r>0$ and $\eta\ge 0$ such that
\begin{equation} \label{etay}
\eta < \frac{A_{\min}(r,w_0)}{G_{\max}(r,w_0)} ,
\end{equation}
(which is equivalent to $\theta(r,w_0,\eta)>0$),
\begin{equation} \label{bmax}
\eta B_{\max}(r,w_0)\leq \frac{1}{4},
\end{equation}
and
\begin{equation} \label{heta}
p:=\frac{2\sqrt{f(w_0)}}{r \sqrt{\theta(r,w_0,\eta)}}\left(1+\sqrt{\eta}\left(\frac{\sqrt{G_{\max}(r,w_0)}}{\sqrt{\theta(r,w_0,\eta)}}
+ \sqrt{B_{\max}(r,w_0)}\right)\right)<1.
\end{equation}
Then
\begin{equation} \label{e1}
\P(\tau_r \wedge \tau =\infty) \geq 1-p>0.
\end{equation}
Moreover, conditioned on the event $\{\tau_r \wedge \tau =\infty\}$,
the process $w_t$ converges almost surely to some $x^{\ast} \in  B_r(w_0)\cap \mathcal{S}$. 
Furthermore, for all $\epsilon>0$ and $t>0$,
\begin{equation} \label{rate}\begin{split}
\P \left( \Vert w_{t}-x^{\ast} \Vert >\epsilon \;\middle|\; \tau_r \wedge \tau =\infty\right) 
\leq \frac{r}{\epsilon} e^{-\theta(r,w_0,\eta) t/2}.
\end{split}\end{equation}
\end{theorem}

\begin{remark}{(On \cite{C22}.)}
When $\eta = 0$, Theorem~\ref{t2} reduces to the deterministic setting studied in \cite[Theorem 2.1]{C22}, which establishes convergence of (non-stochastic) gradient descent.
In this case, Assumption~\textnormal{(\ref{heta})} coincides with the condition introduced by Chatterjee, namely,
\begin{equation}
\label{eq:chat}
A_{\min}(r, w_0) > \frac{4f(w_0)}{r^2},
\end{equation}
and the convergence rate obtained in Theorem~\ref{t2} matches that of \cite{C22} when $\eta = 0$, namely exponential decay of the form
\[
\Vert w_{t}-x^{\ast} \Vert \leq r  e^{- \frac{A_{\min}(r, w_0)}{2} t}.
\]
\end{remark}

\begin{remark}{(On the Polyak–\L{}ojasiewicz (PL) condition.)}
Assumption \eqref{eq:chat} is closely related to the PL condition, which is widely used in non-convex optimization. 
The PL condition, together with Assumption~\ref{a1}, asserts that there exists a constant $\mu > 0$ such that for all $w \in \mathbb{R}^D$,
\begin{equation} \label{pl}
\|\nabla f(w)\|^2 \geq \mu f(w).
\end{equation}
Under this condition, and assuming that $\nabla f$ is globally Lipschitz continuous, \cite{KNS16} show that gradient descent with a suitable step size converges linearly to a global minimizer of $f$.
Assumption~\eqref{eq:chat} is clearly weaker than the PL condition: indeed, the PL inequality implies that $A_{\min}(r, w_0) \geq \mu$ for all centers $w_0$ and radii $r > 0$. Thus, the PL condition ensures that \eqref{eq:chat} holds for sufficiently large balls. By contrast, \eqref{eq:chat} only requires local boundedness, making it more broadly applicable than standard criteria for global convergence of gradient descent.
In this work, we extend condition~\eqref{eq:chat} to the stochastic setting, leading to Assumptions~\eqref{etay}, \eqref{bmax}, and \eqref{heta}. Notably, Assumption~\eqref{etay} is stronger than the PL condition, as it imposes a lower bound not only on $\|\nabla f(w)\|^2 / f(w)$, but on the smaller quantity
\[
\frac{\|\nabla f(w)\|^2}{f(w)} - \eta  \frac{\mathrm{Tr}(\sigma(w)^\top Hf(w) \sigma(w))}{2f(w)}.
\]
However, since $\eta$ can be chosen sufficiently small, it suffices to ensure that the term
\[
\frac{\mathrm{Tr}(\sigma(w)^\top Hf(w) \sigma(w))}{2f(w)}
\]
remains locally bounded. In Section~\ref{sec:app}, we demonstrate how this can be verified in the case of deep neural networks. See also Remark~\ref{re:bg} below.
\end{remark}
\begin{remark}\label{re:bg}
The additional conditions required in the stochastic setting involve the functions
\[
b(w) := \frac{\mathrm{Tr}(\sigma(w)^\top \sigma(w))}{4f(w)} \quad \text{and} \quad g(w) := \frac{\mathrm{Tr}(\sigma(w)^\top Hf(w) \sigma(w))}{2f(w)}
\]
defined in \textnormal{(\ref{eq:b})} and \textnormal{(\ref{eq:a})}, respectively.
In particular, we require that $B_{\max}(r, w_0) < \infty$ and $G_{\max}(r, w_0) < \infty$ for some radius $r > 0$ such that $B_r(w_0) \cap \mathcal{S} \neq \emptyset$.
To clarify the motivation for these assumptions, consider first the intuition behind the PL conditions~\eqref{eq:chat} and~\eqref{pl}. These conditions allow the gradient norm to decrease as $f(w)$ becomes small, but prevent it from vanishing too quickly; it must remain at least of order $\sqrt{f(w)}$.
Since $f$ is twice continuously differentiable, the entries of the Hessian matrix $Hf(w)$ are locally bounded on any ball $B_r(w_0)$. Consequently, boundedness of $g(w)$ implies that the growth of $\sigma(w)$ must also be controlled. Specifically, $\sigma(w)$ may grow, but at most proportionally to $\sqrt{f(w)}$. This is a natural assumption given the role of $\sigma(w)$ in the dynamics of the stochastic process.
In Section~\ref{sec:app}, we show that these conditions are plausible in the context of overparameterized neural networks.
\end{remark}

\begin{remark}{(On \( p \) as \( \eta \to 0 \).)}
Theorem \ref{t2} guarantees convergence to a global minimum of \( f \) with probability at least \( 1 - p \), where \( p \) remains bounded away from 1 for sufficiently small \( r \) and \( \eta \).
However, as \( \eta \to 0 \), the theorem does not guarantee that \( p \to 0 \) under condition~\eqref{eq:chat}. This apparent lack of continuity in \( p \) with respect to \( \eta \) may be an artifact of the proof technique.
It is natural to conjecture that \( p \to 0 \) as \( \eta \to 0 \). Supporting this, Section~\ref{sec:app} shows that, in the context of neural networks, the probability of convergence can be made arbitrarily close to 1 by choosing \( r \) and \( \eta \) sufficiently small.
\end{remark}

\begin{remark}{(On probability-one convergence.)}
Theorem \ref{t2} shows that if the stochastic process is initialized sufficiently close to a global minimum, and the functions \( f \) and \( \sigma \) satisfy certain regularity conditions, then convergence occurs with positive probability.
We conjecture that, in many cases, this positive-probability convergence implies a stronger property: from an \emph{arbitrary} initialization, the process converges \emph{almost surely} to a global minimum of \( f \).
This reasoning is based on the Markovian nature of the process. For convergence with positive probability, it suffices that there exists some time \( t \geq 0 \) and radius \( r > 0 \) such that the process enters the ball \( B_r(w_t) \) around some minimum \( w_t \in \mathcal{S} \), and that Assumptions~\eqref{etay}, \eqref{bmax}, and \eqref{heta} hold with \( w_0 \) replaced by \( w_t \).
Thus, the key point is that the process eventually reaches a sufficiently small neighborhood of the minima.
This is plausible if the gradient norm satisfies \( \|\nabla f(w)\| \to \infty \) as \( \|w\| \to \infty \), ensuring that the set of global minima \( \mathcal{S} \) is compact, and if the process exhibits diffusive behavior away from \( \mathcal{S} \). In particular, for any closed ball \( B \) that does not intersect \( \mathcal{S} \), the process almost surely does not remain in \( B \) indefinitely. This is reasonable given the noise structure encoded by \( \sigma(\cdot) \), which remains nondegenerate when \( f \) is bounded away from zero.
Establishing rigorous almost-sure convergence results from arbitrary initializations goes beyond the scope of this paper and is left for future work.
\end{remark}

\section{Related literature}
\label{sec:lit}

A significant effort has been devoted to the theoretical understanding
of the performance of gradient descent and stochastic gradient descent
algorithms in nonlinear optimization, with particular emphasis on
training neural networks. It is both natural and useful to study continuous-time approximations of these algorithms. 
For (non-stochastic) gradient descent this leads to the study of gradient flows.
The case when the objective function is convex is well understood \citep{Nes13}. 
While convexity is an important special case, the objective function in neural network training is typically nonconvex, which has motivated a large body of research. 

Our starting point is the result of \cite{C22}, who established a general sufficient condition 
for convergence of gradient descent. 
Chatterjee's criterion applies to deep neural networks with smooth activation functions, implying that 
gradient descent with appropriate initialization and step size converges to a global minimum of the loss function. 
We refer the reader to \cite{C22} for comparisons with earlier work on sufficient conditions for the convergence of
gradient descent. 
Our main result extends Chatterjee's result to a continuous-time approximation
of stochastic gradient descent under additional assumptions that are needed to accommodate the stochastic setting.

\cite{SeKaLeDeSr22} take a different approach to establish convergence properties of discrete-time stochastic
gradient descent by identifying general conditions under which stochastic gradient descent and 
gradient descent converge to the same point. 
In our analysis there is no reason why the two methods 
should converge to the same point, since we analyze the process \eqref{w2} directly.

As in \cite{C22}, we show that the sufficient conditions for stochastic gradient descent to converge to an optimum
are satisfied for a wide class of deep neural networks.
\cite{A23} also derive general sufficient conditions for the convergence of stochastic gradient descent. 
They write \eqref{w0} as 
\[
w_{k+1}=w_k-\eta \nabla f(w_k)+\eta  (\nabla f(w_k)-\nabla \ell(w_k, z_k))~,
\]
with the assumption
\[
\nabla f(w_k)-\nabla \ell(w_k, z_k)=\sqrt{\sigma f(w_k)} Z_{w_k, z_k},
\]
where $\sigma>0$ is a constant and $Z_{w_k, z_k}$ is a zero-mean noise term with identity covariance. 
In the continuous-time limit, this corresponds to
\[
\sigma(w)\sigma(w)^\top=\sigma^2 f(w) \E \left[ Z_t Z_t^\top \right].
\]
This assumption is different from ours and applies to a different class of problems. In particular, the 
simple overparametrized linear regression setup described in Section \ref{sec:app} does not satisfy this condition.

\cite{LZB22} establish in their Theorem 7 linear convergence of discrete stochastic gradient descent with random minibatches under the local PL condition. Their analysis is carried out in the empirical risk setting. Importantly, their results assume the local PL inequality as a hypothesis but no sufficient conditions are provided ensuring that it holds. Thus, while they prove that discrete-time SGD converges linearly once the local PL is satisfied, their framework does not address when PL holds.

\cite{NMM22} analyze the spectral properties of the neural tangent kernel (NTK) for deep ReLU networks, see Remark \ref{NTK} below. They obtain tight probabilistic bounds on the smallest eigenvalue of the NTK, showing that with high probability it is strictly positive when the network is sufficiently wide and randomly initialized. Their results apply to fully connected ReLU networks of arbitrary depth, under standard random initialization schemes, and they assume that the input distribution has sub-Gaussian tails to ensure concentration of the NTK spectrum. As shown in Remark \ref{NTK}, the PL constant can be identified with the smallest eigenvalue of the NTK. Thus, their result implies that a local PL inequality holds around initialization with high probability in the empirical risk setting. In contrast, our work establishes sufficient conditions for a PL-type inequality to hold at the population risk level, under the assumption of bounded input data (Assumption~\ref{a5}), allowing for a broader class of smooth activation functions beyond ReLU (Assumption~\ref{a7}), and for the continuous-time SDE approximation of SGD under these structural conditions.

\cite{LiGa21} study stochastic gradient Langevin dynamics similar to \eqref{w2} and their discretizations, but with
$\sigma(w_t)$ replaced by a constant.Their main contributions are to provide finite-time bounds on the generalization error and to derive error estimates for the discretization of Langevin dynamics, showing how closely discrete-time algorithms approximate the continuous-time diffusion in the empirical risk minimization setting.

\cite{ScPi24} analyze a continuous-time model of stochastic gradient descent similar to \eqref{w2}, but specialized to the case of linear regression. They show how least-squares SGD can be approximated by an SDE and investigate the resulting convergence and stability properties. Their analysis, however, is restricted to the well-specified linear model and does not extend to the overparameterized non-linear networks considered in our work.

A closely related line of research investigates the dynamics of stochastic gradient descent around saddle points, which frequently occur in high-dimensional nonconvex optimization.
It is well known that stochastic perturbations can help SGD escape saddle points efficiently. For instance, \cite{JGNKJ17} prove that, for discrete-time gradient descent with random perturbations, one can escape strict saddle points in polynomial time under standard smoothness assumptions. The more recent work \cite{ZLGU24} analyzes the behavior of discrete-time stochastic gradient descent near two different classes of saddle points and establishes conditions for their probabilistic stability, that is, when SGD is likely to converge to or escape from such saddles.
Our setting, however, is different: we assume $f \geq 0$ with $f(w)=0$ for some $w$. While saddle points may exist, the value of $f$ at such a point must be strictly positive, and our results guarantee that the dynamics given by
\eqref{w2} cannot get stuck in a saddle point.

From the point of view of stochastic differential equations, our result is also of interest in the context of explosion: in \eqref{w2} we only assume locally
Lipschitz coefficients, and hence the solution may blow up in finite time. Our results show that, despite this possibility, the process converges
with positive probability; see \cite{Mao1,Mao2}.

\section{Application to deep neural networks}
\label{sec:app}

In this section we show how Theorem \ref{t2} can be applied to the case of training multilayer neural networks using stochastic gradient descent.
To this end, we verify the conditions of the theorem for this particular setting. 

\textcolor{black}{Consider a multilayer feedforward neural network defined as follows. 
The weights of the network are given by $(W_1, W_2,\ldots,W_L)$, where each $W_{\ell}$ is a $d_{\ell} \times d_{\ell-1}$ matrix,
with $d_{L}=1$ and $d_0=d$. 
The layer $W_1$ is called the \emph{output layer}, while $W_2,\ldots,W_{L-1}$ are the \emph{hidden layers}. 
The number of layers $L \geq 2$ is the \emph{depth} of the network, and the maximum of $d_1,\ldots,d_L$ is the \emph{width}.  
We also consider a sequence of bias vectors $b_1,\ldots, b_L$, with $b_{\ell} \in \R^{d_{\ell}}$, and fixed activation functions $v_1,\ldots,v_{L}: \R  \rightarrow \R$, where $v_L$ is the identity map.}

\textcolor{black}{The parameter vector is 
\[
w=(W_1, b_1, \ldots, W_L, b_L)\in \R^D, 
\qquad D=\sum_{\ell=1}^L d_{\ell}(d_{\ell}+1).
\]
Given $w \in \R^D$, the network defines the map $\beta(w,\cdot):\R^d \rightarrow \R$ by
\[
\beta(w,x)=v_L\!\Big(W_L v_{L-1}\!\big(\cdots W_2 v_1(W_1x+b_1)+b_2 \cdots\big)+b_L\Big),
\]
where each activation function $v_\ell$ acts componentwise on vectors of dimension $d_\ell$, and satisfy the following condition.  
\begin{assumption} \label{a7}
The activation functions $v_1,\ldots,v_L$ satisfy $v_\ell \in \mathcal{C}^2(\R)$, $v_{\ell}(0)=0$, and $v'_{\ell}(y)>0$ for each $\ell \in \{1,\ldots,L\}$ and all $y \in \R$.
\end{assumption}}

With quadratic loss, the learning problem consists of minimizing
\[
f(w)=\E\!\left[(\beta(w,X) -Y)^2 \right],
\]
where the random pair $(X,Y)$ takes values in $\R^d\times \R$.  
Let $\Sigma_X=\E[XX^\top]$. We assume the following.
\begin{assumption} \label{a4}
$\lambda_{\min}(\Sigma_X)>0$.
\end{assumption}
\begin{assumption} \label{a5}
There exists $K>0$ such that $\Vert X \Vert \leq K$ almost surely.
\end{assumption}

In order to apply Theorem \ref{t2}, we need to assume $f(w)=0$ for some $w\in \R^D$ (Assumption \ref{a1B}), which is equivalent to the following.
\begin{assumption} \label{a3}
There exists $w^{\ast}=(W_1^{\ast}, b_1^{\ast}, \ldots, W_L^{\ast}, b_L^{\ast})\in \R^D$ such that
\[
Y=\beta(w^{\ast},X).
\]
\end{assumption}
Under Assumption \ref{a3}, the loss can be written as
\[
\ell(w, X)=(\beta(w,X)-\beta(w^{\ast},X))^2,
\]
and the set of global minima of $f$ defined in \eqref{eq:s} is the (non-empty) closed subset
\begin{equation} \label{snet}
\mathcal{S}=\{ w=(W_1, b_1, \ldots, W_L, b_L) \in \R^D: \beta(w,X)=\beta(w^{\ast},X) \text{ a.s.}\}.
\end{equation}

Let $w_t$ be the solution to the {\sc sde} \eqref{w2} associated with this minimization problem. We assume the following structure for the initialization.
\begin{assumption} \label{a6}
The initial condition  $w_0=(W^0_1, b^0_1, \ldots, W_L^0, b_L^0)\in \R^D$ of the {\sc sde} \eqref{w2} satisfies:  
$W^0_1=0$, $b^0_{\ell}=0$ for each $\ell \in \{1,\ldots,L\}$, and all entries of $W_2^0,\ldots,W_{L}^0$ are nonnegative.
\end{assumption}

The following theorem provides sufficient conditions for convergence of the {\sc sde} \eqref{w2} associated with this problem as an application of Theorem \ref{t2}.
\begin{theorem} \label{nn1}
Consider the minimization problem associated with the neural network above, with activation functions satisfying Assumption \ref{a7} and $(X,Y)$ satisfying Assumptions  \ref{a4}, \ref{a5}, and \ref{a3}. 
Let $w_t$ be the solution to the {\sc sde} \eqref{w2} with initial condition satisfying Assumption \ref{a6}.
Let $\gamma>0$ be the minimum entry of $W_2^0,\ldots,W_{L-1}^0$, and let $M>0$ be the maximum entry of $W_2^0,\ldots,W_{L}^0$. 
Then, for all $\delta \in (0,1)$ there exist constants $N>0$ and $\eta_0>0$ depending only on $\lambda_{\min}(\Sigma_X)$, $\gamma$, $M$, $K$, and $v_1,\ldots,v_L$, such that if the entries of $W_L^0$ are all $\geq N$ and  $\eta \leq \eta_0$, then
\[
\P(\tau_{\gamma/2} \wedge \tau=\infty)\geq 1-\delta.
\]
Moreover, conditioned on this event, $w_t$ converges almost surely to some element $x^{\ast}$ in $B_{\gamma/2}(w_0)\cap \mathcal{S}$, and for all 
$\epsilon>0$ and $t>0$,
\begin{equation} \label{ratebis}\begin{split}
\P \left( \Vert w_{t}-x^{\ast} \Vert >\epsilon \,\middle|\, \tau_{\gamma/2} \wedge \tau =\infty\right) 
\leq \frac{\gamma}{2\epsilon} e^{-C\lambda_{\min}(\Sigma_X) (N-\gamma/2) t},
\end{split}\end{equation}
where $C>0$ depends only on $\gamma$, $M$, $K$, and $v_1,\ldots,v_L$.
\end{theorem}

The intuition behind Theorem~\ref{nn1} is that if the entries of the final layer $W_L^0$ are chosen large enough and the step size $\eta$ is sufficiently small, then the deterministic drift term $-\nabla f(w)$ dominates the stochastic noise. 
In this regime, the dynamics of \eqref{w2} are effectively pushed toward the global minimum set $\mathcal{S}$, which ensures convergence with high probability.  
The proof is divided into two steps.  
In the first step, we establish bounds on the functions $a(w)$, $b(w)$, and $g(w)$ defined in \eqref{eq:a} and \eqref{eq:b}. 
This is done in Lemma~\ref{prelem} below, where only Assumptions~\ref{a7}, \ref{a5}, and \ref{a3} are needed.  
In the second step, we apply Theorem~\ref{t2}, which
requires verifying that its assumptions hold in our setting, which in turn are implied by Assumptions~\ref{a4} and \ref{a6}. See the remarks below for a more detailed discussion of these assumptions.
\begin{remark}{(On the PL condition.)} \label{NTK}
In the overparameterized regime, the PL condition is naturally linked to the spectral properties of the Neural Tangent Kernel (NTK). See for instance \cite{LZB22} and the references therein.
We consider a neural network \( \beta(w, X_i) \in \mathbb{R} \), and define the empirical loss over i.i.d.\ training data \( (X_1, Y_1), \dots, (X_n, Y_n) \) as
\begin{equation} \label{emp_loss}
f_n(w) := \sum_{i=1}^n h_i(w)^2,
\end{equation}
where $h_i(w):=\beta(w, X_i) - Y_i$. 
We set $h(w)$ to be the column vector whose entries are $(h_1(w),\ldots h_n(w))$.
We consider the $n \times d$ Jacobian matrix $J(w)$ whose $i$th-row is the vector
$(\nabla \beta(w, X_i))^{\top}$.
Then, the (empirical) NTK is the $n \times n $ matrix defined as 
\begin{equation} \label{emp_loss2}
N(w)=J(w) (J(w))^{\top}.
\end{equation}
Because $N(w)$ is a Gram matrix, it is symmetric positive semidefinite, and for any $\xi \in \R^n$,
$$
\xi^{\top} N(w) \xi \geq \lambda_{\min}(N(w)) \sum_{i=1}^n \xi_i^2.
$$
Therefore, since $\nabla f_n(w)=2 (J(w))^{\top} h(w)$, we get that
\begin{equation} \label{PL2} \begin{split}
\|\nabla f_n(w)\|^2 &=(\nabla f_n(w))^{\top} \nabla f_n(w)
=  4 (h(w))^{\top} N(w) h(w)\\
& \geq 4 \lambda_{\min}(N(w))  f_n(w).
\end{split}
\end{equation}
Thus, in this case, the PL constant in \textnormal{(\ref{pl})} is given by \( \mu = 4\lambda_{\min}(N(w)) \). In sufficiently overparameterized networks, as for instance in \cite{C22}, under mild assumptions on the initialization and the data distribution, the smallest eigenvalue \( \lambda_{\min}(N(w)) \) is strictly positive in a neighborhood of the initialization. This explains why, in those settings, gradient-based methods can achieve fast convergence despite the nonconvexity of the loss landscape. 
Most existing results establish such PL-type inequalities for the \emph{empirical loss} \(f_n\). See for instance \cite{NMM22} and the references therein.
In general, these do not automatically extend to the \emph{population risk} considered in this paper
\[
f(w) = \mathbb{E}[(\beta(w, X) - Y)^2],
\]
unless additional assumptions are imposed. 
A distinctive feature of our work is that we establish convergence directly for the population risk \(f\), by imposing Assumptions \textnormal{\ref{a7}–\ref{a6}}. 
\end{remark}
\begin{remark}{(On Assumption \textnormal{\ref{a7}}.)}
The class of activation functions allowed by Assumption~\ref{a7} includes many commonly used functions, such as the linear activation \( v(y) = y \), the bipolar sigmoid \( v(y) = \tfrac{1 - e^{-y}}{1 + e^{-y}} \), and the hyperbolic tangent \( v(y) = \tanh(y) \).
The condition \( v(0) = 0 \) is not essential and can be relaxed by incorporating bias terms. 
As a result, Theorem~\ref{nn1} also applies to other widely used activations such as the sigmoid \( v(y) = \tfrac{1}{1 + e^{-y}} \), the softplus (smoothed ReLU) \( v(y) = \log(1 + e^y) \), and the complementary log-log function \( v(y) = 1 - e^{-e^{-y}} \).
However, the requirement that \( v \) be twice differentiable excludes non-smooth activations such as ReLU \( v(y) = \max\{y, 0\} \), the step function \( v(y) = \mathbf{1}_{\{y>0\}} \), and other piecewise linear functions.
\end{remark}
\begin{remark}{(On Assumption \ref{a6}.)}
Theorem~\ref{nn1} identifies a set of initializations \( w_0 \in \mathbb{R}^D \) for which the probability that the solution to the stochastic differential equation~\eqref{w2} converges can be made arbitrarily close to one.
The key idea is to choose \( w_0 \) such that two natural conditions are met:
 The neural network 
 satisfies \( \beta(w_0, X) = 0 \), leading to a bounded loss \( f(w_0) \);
The PL condition holds locally around \( w_0 \). These two conditions allow us to bound $p$ defined in (\ref{heta}) for sufficiently small $\eta$. 
Observe that any initialization satisfying these conditions—bounded initial loss and local PL property—can be used to guarantee high-probability convergence under SGD.
\end{remark}

\section{Proofs}
\label{sec:proofs}

\begin{proof}[Proof of Lemma \ref{l1}]
If $f(w_0)=0$, since $f$ is a nonnegative $\mathcal{C}^2(\R^D)$ function, we have $\nabla f(w_0)=0$. 
Moreover, since $f(w_0)=\E[\ell(w_0,x)]$ and $\ell$ is nonnegative, we get that $\ell(w_0,Z)=0$ and thus $\nabla \ell(w_0,Z)=0$, as it is a $\mathcal{C}^2(\R^D)$ function in its first variable.
In particular, $\sigma(w_0)=0$. Then $w_t=w_0$ for all $t>0$, and the statement is true for $t=0$. 
If $w_s=x$ for some $s>0$ such that $f(x)=0$, since the process $w_t$ is time-homogeneous, the distribution of $w_t$ starting at $w_s=x$ is the same as the distribution of $w_{t-s}$ starting at $w_0=x$. By the argument above we conclude that $w_{t-s}=x$ for all $t>s$, which completes the proof.
\end{proof}

\begin{proof}[Proof of Lemma \ref{p1}]
\textcolor{black}{As explained in Section \ref{sec:pre}, we apply the multi-dimensional It\^o formula (Theorem \ref{itofor}) to the function $\log f(w_{t\wedge \tau_r \wedge \tau})$. That is, we consider the process $x_t=w_{t\wedge \tau_r \wedge \tau}$
and the function $V=\log f$. In particular, 
$u_t=-\nabla f(w_{t\wedge \tau_r \wedge \tau})$ and $v_t=\sqrt{\eta} \sigma(w_{t\wedge \tau_r \wedge \tau})$.
Observe that, by adding and subtracting the term $\nabla f(w_0)$ and using Assumption \ref{a1}, we get
$$
\int_0^\top \Vert u_t \Vert dt \leq 
(\textnormal{Lip}(f, r,w_0)r +\Vert \nabla f(w_0) \Vert)(\tau_r \wedge \tau),
$$
where Lip denotes the Lipschitz constant defined in (\ref{lip}).
Hence, if $\tau_r \wedge \tau <\infty$ a.s., assumption (\ref{h1}) of Theorem \ref{itofor} holds.
On the other hand, if $\tau_r \wedge \tau=\infty$ a.s., then $T=\infty$ and assumption (\ref{h1}) also holds since the process exists for all times and remains inside the ball. Proceeding similarly, we can easily show that Assumption \ref{a1} 
implies condition (\ref{h2}) of Theorem \ref{itofor}.
Finally, since the loss function $\ell$ is assumed to be twice differentiable in the first variable, we conclude that all the assumptions of Theorem \ref{itofor} are satisfied. Thus, using (\ref{md}), we obtain}
\begin{equation*} \begin{split}
   \log f(w_{t \wedge \tau_r \wedge \tau})
   &=\log f(w_0)-\int_0^{t \wedge \tau_r \wedge \tau} (a (w_s)-\eta g(w_s))  ds +M_{t}-\frac{1}{2} \langle M \rangle_{t }\\
   &\leq \log f(w_0)-(t \wedge \tau_r \wedge \tau) \theta(r,w_0,\eta) +M_{t}-\frac{1}{2} \langle M \rangle_{t}.
\end{split}
\end{equation*}
Then, taking exponentials, the result follows.
\end{proof}

\begin{proof}[Proof of Lemma \ref{t1}]
Assume that $\tau_r \wedge \tau =\infty$. Observe that, in particular,  $T=\infty$.
Then, by the same arguments as in the proof of Lemma \ref{p1}, we have that for all $t>0$ a.s.
\begin{equation} \label{itof2}
\frac{\log f(w_{t })}{t} \leq \frac{\log f(w_0)}{t} -\theta(r,w_0,\eta)  + \frac{1}{t}\left( M_{t}-\frac{1}{2} \langle M \rangle_{t}\right)~.
\end{equation}
On the other hand, appealing to the exponential martingale inequality (see \cite[Theorem 7.4, page 44]{Mao1}), we get that
for any fixed $n>0$ and for all $x>0$,
\begin{equation*} 
\P\left\{ \sup_{t\in [0,n]} \left( M_{t}-\frac{1}{2} \langle M \rangle_{t} \right)  > x \right\} \leq e^{-x}~.
\end{equation*}
\textcolor{black}{
Choosing $x=2 \log n$ and appealing to the Borel–Cantelli lemma, we get that for almost all $\omega \in \Omega$, there exists an integer $n_0=n_0(\omega)>1$ such that for all $t \in [0,n]$ and $n \geq n_0$,
$$
M_t -\frac{1}{2}
\langle M \rangle_t\leq 2 \log n.
$$
Therefore, by (\ref{itof2}), we obtain that for all $t \in [n-1,n]$ and $n \geq n_0$,
\begin{equation*} 
\frac{\log f(w_{t })}{t} \leq \frac{\log f(w_0)}{t} -\theta(r,w_0,\eta)  + \frac{2 \log n}{n-1} \quad \text{a.s.}
\end{equation*}
It follows that 
$$
\limsup_{t \rightarrow \infty} \frac{\log f(w_{t })}{t} \leq -\theta(r,w_0,\eta) \quad \text{a.s.},
$$
which concludes the proof.}
\end{proof}

\textcolor{black}{
\begin{proof}[Proof of Lemma \ref{mart}]
We apply the multi-dimensional It\^o formula (Theorem \ref{itofor}) to the one-dimensional process $\mathcal{E}_t$ with $x_t=c M_{t}-\frac12 c^2 \langle M \rangle_{t}$ and $V(x)=e^x$. We get
\begin{equation*}
\begin{split}
\mathcal{E}_t=1+ 
  \sqrt{\eta} c \int_0^{t \wedge \tau_r \wedge \tau} \mathcal{E}_s \frac{(\nabla f(w_s))^\top \sigma(w_s)}{f(w_s)}   dB_s,
\end{split}
\end{equation*}
which is an $\mathcal{F}_t$-martingale since an $\mathcal{F}_t$-stopped It\^o integral is an $\mathcal{F}_t$-martingale (see \cite[Theorem 3.3, page 11]{Mao1}).
\end{proof}}

\begin{proof}[Proof of Theorem \ref{t2}]
Let $0\leq u<t$ and set $\bar{t} := t \wedge\tau_r \wedge \tau$ and $\bar{u} := u \wedge\tau_r \wedge \tau$.
Let $\epsilon>0$. 
Then, by Markov's inequality,
\begin{equation} \label{step1} \begin{split}
&\P \left( \Vert w_{\bar{t}}-w_{\bar{u}} \Vert >\epsilon  \right)  
\leq \frac{\E\left[ \Vert w_{\bar{t}}-w_{\bar{u}} \Vert  \right]}{\epsilon}\\
&\qquad  \leq \frac{\E\left[  \int_{\bar{u}}^{\bar{t}} \Vert \nabla f(w_s) \Vert ds  \right]}{\epsilon}
+\sqrt{\eta} \frac{\E\left[ \Vert \int_{\bar{u}}^{\bar{t}} \sigma(w_s) dB_s\Vert \right]}{\epsilon}\\
& \qquad \leq \frac{\E\left[  \int_{\bar{u}}^{\bar{t}} \Vert \nabla f(w_s) \Vert ds  \right]}{\epsilon}
+\sqrt{\eta} \frac{\left\{\E\left[  \int_{\bar{u}}^{\bar{t}} \text{Tr}((\sigma(w_s))^\top\sigma(w_s)) ds  \right]\right\}^{1/2}}{\epsilon}~,
\end{split}\end{equation}
where the last inequality follows from the Cauchy–Schwarz inequality and 
\cite[Theorem 5.21 page 28]{Mao1}.

By the Cauchy–Schwarz inequality,
\begin{equation}
\label{eq:cauchyschwarz}
\E\left[  \int_{\bar{u}}^{\bar{t}} \Vert \nabla f(w_s) \Vert ds  \right]
\leq \left\{\E\left[  \int_{\bar{u}}^{\bar{t}} \frac{\Vert \nabla f(w_s) \Vert^2}{2 \sqrt{f(w_s)}} ds  \right] \right\}^{1/2} \left\{\E\left[  \int_{\bar{u}}^{\bar{t}} 2\sqrt{f(w_s)} ds  \right]\right\}^{1/2}.
\end{equation}

Observe that, on the event $\{u\geq \tau_r \wedge \tau\}$, we have $\bar{u}=\bar{t}$ and then all the integrals between $\bar{u}$ and $\bar{t}$ vanish.
Thus, it suffices to consider the event $A:= \{u<\tau_r \wedge \tau\}$, so $\bar{u}=u$.

\textcolor{black}{We first bound the second term on the right-hand side of \eqref{eq:cauchyschwarz}.} Using Lemma \ref{p1}, we get
\begin{equation} \label{step2} \begin{split}
\E\left[  \int_{\bar{u}}^{\bar{t}} 2\sqrt{f(w_s)} ds {\bf 1}_A  \right] 
&\leq \E\left[  \int_u^{\bar{t}} 2 \sqrt{f(w_0)} e^{-\theta(r,w_0,\eta) s/2} e^{\frac12 M_{s}-\frac14  \langle M \rangle_{s}}ds {\bf 1}_A \right]\\
&\leq   \int_u^{\infty} 2 \sqrt{f(w_0)} e^{-\theta(r,w_0,\eta) s/2} \E\left[e^{\frac12 M_{s}-\frac18  \langle M \rangle_{s}}\right] ds  \\
&= \frac{4 \textcolor{black}{\sqrt{f(w_0)}}}{\theta(r,w_0,\eta)}e^{-\theta(r,w_0,\eta) u/2},
\end{split}
\end{equation}
where in the second inequality we used that $\langle M \rangle_{s}\ge 0$ for all $s\ge 0$, and
the equality follows from (\ref{eq:martingale}) with $c=\frac12$.

To bound the first term on the right-hand side of \eqref{eq:cauchyschwarz},
we apply the multi-dimensional It\^o formula (Theorem \ref{itofor}) to $\sqrt{f(w_{\bar{t}})}$. That is,
\begin{equation*}
\begin{split}
\sqrt{f(w_{\bar{t}})}=&\sqrt{f(w_{\bar{u}})}-\int_{\bar{u}}^{\bar{t}}  \frac{\Vert \nabla f(w_s)\Vert^2}{2 \sqrt{f(w_s)}} ds+ \sqrt{\eta}\int_{\bar{u}}^{\bar{t}} \frac{(\nabla f(w_s))^\top}{2\sqrt{f(w_s)}}  \sigma(w_s) dB_s+Z_{\bar{t}},
\end{split}
\end{equation*}
where
\begin{equation*}
\begin{split}
Z_{\bar{t}}:=\frac{\eta}{2}\int_{\bar{u}}^{\bar{t}} \text{Tr}\left((\sigma(w_s))^\top \left(\frac{ Hf(w_s) }{2\sqrt{f(w_s)}}- \frac{\nabla f(w_s) 
(\nabla f(w_s))^\top}{4 f(w_s)^{3/2}}\right)\sigma(w_s)\right)ds~.
\end{split}
\end{equation*}
Taking expectations, and noting that the stochastic integral term has zero mean, we get 
\begin{equation*} 
\begin{split}
\E\left[\int_{\bar{u}}^{\bar{t}}   \frac{\Vert \nabla f(w_s)\Vert^2}{2 \sqrt{f(w_s)}} ds\right]=\E\left[\sqrt{f(w_{\bar{u}})}\right]-\E\left[ \sqrt{f(w_{\bar{t}})}\right]+\E\left[Z_{\bar{t}}\right].
\end{split}
\end{equation*}
Then, 
by the definition of $G_{\max}(r,w_0)$ and
using a similar argument as above with Lemma \ref{p1} and  (\ref{eq:martingale}), we obtain
\begin{equation} \label{step3} \begin{split}
\E \left[\int_{\bar{u}}^{\bar{t}}  \frac{\Vert \nabla f(w_s)\Vert^2}{2 \sqrt{f(w_s)}} ds {\bf 1}_A\right]&\leq \E\left[\sqrt{f(w_u)}{\bf 1}_A\right]+\frac{\eta}{2}\E \left[\int_u^{\bar{t}} \frac{\text{Tr}\left((\sigma(w_s))^\top Hf(w_s) \sigma(w_s)\right)}{2\sqrt{f(w_s)}}ds {\bf 1}_A\right]\\
&\leq \E\left[\sqrt{f(w_u)}{\bf 1}_A\right]+\frac{\eta G_{\max}(r,w_0)}{2}  \E\left[  \int_u^{\bar{t}} \sqrt{f(w_s)} ds  {\bf 1}_A\right] \\
&\leq \sqrt{f(w_0)}e^{-\theta(r,w_0,\eta) u/2}+\frac{\eta G_{\max}(r,w_0)}{\theta(r,w_0,\eta)} \sqrt{f(w_0)}e^{-\theta(r,w_0,\eta) u/2}.
\end{split}
\end{equation}
\textcolor{black}{Substituting equations (\ref{step2}) and (\ref{step3}) into (\ref{eq:cauchyschwarz}) yields}
\begin{equation} \label{step4}
\E\left[  \int_{\bar{u}}^{\bar{t}} \Vert \nabla f(w_s) \Vert ds  \right]\leq \frac{2\sqrt{f(w_0)}}{\sqrt{\theta(r,w_0,\eta)}}e^{-\theta(r,w_0,\eta) u/2}\left( 1+\frac{\sqrt{\eta G_{\max}(r,w_0)}}{\sqrt{\theta(r,w_0,\eta)}}\right)~.
\end{equation}
Moreover,
by the definition of $B_{\max}(r,w_0)$ and
appealing to Lemma \ref{p1}, we get
\begin{equation} \label{step5} \begin{split}
&\E\left[  \int_{\bar{u}}^{\bar{t}} \text{Tr}\left((\sigma(w_s))^\top\sigma(w_s)\right) ds {\bf 1}_A \right]\leq
B_{\max}(r,w_0)  \E\left[  \int_u^{\bar{t}} 4 f(w_s) ds {\bf 1}_A \right]\\
&\qquad \qquad\leq B_{\max}(r,w_0)  \E\left[  \int_u^{\infty} 4 f(w_0) e^{-\theta(r,w_0,\eta) s} e^{ M_s-\frac12  \langle M \rangle_s}ds  \right]\\
&\qquad \qquad =  B_{\max}(r,w_0) \frac{4 f(w_0)}{\theta(r,w_0,\eta)} e^{-\theta(r,w_0,\eta) u},
\end{split}
\end{equation}
where the last equality follows again from (\ref{eq:martingale}).

\textcolor{black}{Substituting equations (\ref{step4}) and (\ref{step5}) into (\ref{step1}) shows} that for all $0\leq u<t$ and $\epsilon>0$,
\begin{equation} \label{limit}\begin{split}
&\P \left( \Vert w_{t \wedge \tau_r \wedge \tau}-w_{u \wedge \tau_r \wedge \tau} \Vert >\epsilon \right) \\
&\leq \frac{2\sqrt{f(w_0)}}{\epsilon \sqrt{\theta(r,w_0,\eta)}}e^{-\theta(r,w_0,\eta) u/2}\left(1+\sqrt{\eta}\left(\frac{\sqrt{G_{\max}(r,w_0)}}{\sqrt{\theta(r,w_0,\eta)}}+ \sqrt{ B_{\max}(r,w_0) }\right)\right),
\end{split}\end{equation}
where we observe that the right-hand side is independent of $t$, $\tau_r$, and $\tau$.

We are now ready to prove the two statements of the theorem. We begin with 
(\ref{e1}). Taking $u=0$, $t \uparrow \tau_r \wedge \tau$, and $\epsilon=r$ in (\ref{limit}), we get
\begin{equation*} \begin{split}
\P\left( \tau_r \wedge \tau <\infty\right)\leq \frac{2\sqrt{f(w_0)}}{r \sqrt{\theta(r,w_0,\eta)}}\left(1+\sqrt{\eta}\left(\frac{\sqrt{G_{\max}(r,w_0)}}{\sqrt{\theta(r,w_0,\eta)}}+ \sqrt{ B_{\max}(r,w_0) }\right)\right):= p<1.
\end{split}\end{equation*}
This implies
\begin{eqnarray*}
\P\left( \tau_r \wedge \tau =\infty\right) \geq 1-p>0~,
\end{eqnarray*}
proving (\ref{e1}).

We next prove the second statement of the theorem. 
Using (\ref{limit}) and condition (\ref{heta}), we obtain that for all $0\leq u<t$ and $\epsilon>0$,
\begin{equation} \label{limit2}\begin{split}
\P \left( \Vert w_{t \wedge \tau_r \wedge \tau}-w_{u \wedge \tau_r \wedge \tau} \Vert >\epsilon \right) \leq \frac{r}{\epsilon} e^{-\theta(r,w_0,\eta) u/2}.
\end{split}\end{equation}
Assume that $\tau_r \wedge \tau=\infty$. Then (\ref{limit2}) shows that $w_t$ is a Cauchy sequence in probability. Therefore, by \cite[Theorem 3, Chapter 6]{boro}, the sequence $w_t$ converges in probability to some $x^{\ast} \in B_r(w_0)$ as $t \rightarrow \infty$.
Moreover, taking $t \uparrow\infty$ in (\ref{limit2}), we obtain (\ref{rate}). Since the rate of convergence is exponential, we conclude that $w_t$ converges to $x^{\ast}$ almost surely, conditioned on the event $\{\tau_r \wedge \tau=\infty\}$. Finally, by Lemma \ref{t1}, we have $x^{\ast} \in \mathcal{S}$. This concludes the proof.
\end{proof}

\textcolor{black}{
Next we turn to the proof of Theorem \ref{nn1}.
We start with a preliminary lemma that gives bounds for the functions $a(w)$, $b(w)$, and $g(w)$ defined in (\ref{eq:a}) and (\ref{eq:b}), which will be useful in order to bound the functions (\ref{eq:A}) and (\ref{eq:B}). Recall the setup and notation introduced in Section \ref{sec:app}. To state the lemma, we first introduce some notation to study the derivative of the neural network with respect to the input layer $W_1$, following \cite{C22}.}

\textcolor{black}{
Given $w=(W_1, b_1, \ldots, W_L, b_L)\in \R^D$, we recursively define $\beta_1(w,x)=v_1(W_1x+b_1)$, and for $2 \leq \ell \leq L$, 
$$\beta_{\ell}(w,x)=v_{\ell}(W_{\ell} v_{\ell-1}(\cdots W_2 v_1(W_1x+b_1)+b_2 \cdots)+b_{\ell}),$$
so that $\beta=\beta_L$. Note that $\beta_{\ell}(w,\cdot):\R^d \rightarrow \R^{d_{\ell}}$.
Define $g_1(w,x)=W_1x+b_1$ and for $2 \leq \ell \leq L$,
$$
g_{\ell}(w,x)=W_{\ell} \beta_{\ell-1}(w,x)+b_{\ell},
$$
so that $\beta_{\ell}(w,x)=v_{\ell}(g_{\ell}(w,x))$. We denote by $D_{\ell}(w,x)$ the $d_{\ell} \times d_{\ell}$ diagonal matrix whose diagonal consists of the elements of the vector $v'_{\ell}(g_{\ell}(w,x))$. Then, as noted in \cite{C22}, the partial derivative of $\beta$ with respect to the $(i,j)$ component of $W_1$ is
$$
\partial_{i,j} \beta(w,x)=x_j q_i(w,x),
$$
where
\begin{equation} \label{eq:q}
q_i(w,x)= W_L D_{L-1}(w,x) W_{L-1} \cdots W_2 D_1(w,x) e_i,
\end{equation}
where $e_i \in \R^{d_1}$ is the vector whose $i$th component is $1$ and the rest are zero.}

\textcolor{black}{
Using this notation, we have the following result.
\begin{lemma} \label{prelem}
Consider Assumptions \textnormal{\ref{a7}, \ref{a5},} and \textnormal{\ref{a3}}. Let $a(w)$, $b(w)$, and $g(w)$ be the functions defined in \textnormal{(\ref{eq:a})} and \textnormal{(\ref{eq:b})}.
Then, 
for all $w=(W_1, b_1, \ldots, W_L, b_L)\in \R^D$ such that $f(w) \neq 0$, 
\begin{equation} \begin{split} \label{lambda}
a(w) \geq  4 \lambda_{\min}(\Sigma_X) \sum_{i=1}^{d_1} \min_{x \in \R^d: \Vert x \Vert \leq K} (q_i(w,x))^2 ,
\end{split}
\end{equation}
\begin{equation} \label{prelem2}
b(w)\leq \max_{x \in \R^d: \Vert x \Vert \leq K} \Vert \nabla \beta(w,x)\Vert^2,
\end{equation}
and
\begin{equation} \label{prelem3} \begin{split}
g(w) &\leq 16\max_{x \in \R^d: \Vert x \Vert \leq K} \bigg(\Vert \nabla \beta(w,x)\Vert^2\big(\Vert \nabla \beta(w,x)\Vert^2 \\
&\qquad +D \vert \beta(w,x)-\beta(w^{\ast},x)\vert \lambda_{\max}(H (\beta(w,x))\big) \bigg).
\end{split}
\end{equation}
\end{lemma}}

\begin{proof}\textcolor{black}{
Let $w=(W_1, b_1, \ldots, W_L, b_L)\in \R^D$ be fixed with $f(w) \neq 0$.
Recall that
\begin{equation*} 
f(w)=\E\left[ (\beta(w,X)-\beta(w^{\ast},X))^2 \right].
\end{equation*}}

\textcolor{black}{
We first prove (\ref{lambda}).
Let $X_1,\ldots,X_n$ be $n$ independent copies of the random vector $X$.
Observe that, by Assumption \ref{a3}, for all $i \in \{1,\ldots, n\}$, $\Vert X_i \Vert \leq K$ a.s. We consider the empirical loss as defined in (\ref{emp_loss}), that is,
\begin{equation*} 
f_n(w) := \sum_{i=1}^n (\beta(w,X_i) - \beta(w^{\ast},X_i))^2.
\end{equation*}
Then, using inequality (\ref{PL2}), we get that
\begin{equation} \label{eq:ine1}
\Vert \nabla f_n(w)  \Vert^2
 \geq 4\lambda_{\text{min}} \left( N(w)\right) f_n(w),
\end{equation}
where the matrix $N(w)$ is defined in (\ref{emp_loss2}).
Appealing to inequality (5.4) in the proof of \cite[Theorem 4.1]{C22}, we get that
\begin{equation}   \label{eq:ine2}
\lambda_{\text{min}} \left( N(w)\right)\geq  n\lambda_{\text{min}} \left(\frac{1}{n}\chi^\top \chi \right) \sum_{i=1}^{d_1} \min_{x \in \R^d: \Vert x \Vert \leq K} (q_i(w,x))^2,
\end{equation}
where $\chi$ is the $d \times n$ matrix whose columns are the vectors $X_1,\ldots, X_n$ and $q_i(w,x)$ is defined in (\ref{eq:q}). In order to obtain (\ref{eq:ine2}) it suffices to consider the terms that correspond to the derivative with respect to $W_1$ and lower bound all the other derivatives by zero.}

\textcolor{black}{
Therefore, from (\ref{eq:ine1}) and (\ref{eq:ine2}), we conclude that
$$
\frac{1}{n}\frac{\Vert \nabla f_n(w) \Vert^2}{f_n(w)}
 \geq 4 \lambda_{\text{min}} \left(\frac{1}{n}\chi^\top \chi \right) \sum_{i=1}^{d_1} \min_{x \in \R^d: \Vert x \Vert \leq K} (q_i(w,x))^2.
$$
Now, by the law of large numbers, as $n \rightarrow \infty$, $\frac{1}{n}\frac{\Vert \nabla f_n(w) \Vert^2}{f_n(w)}$ and $\lambda_{\text{min}} \left(\frac{1}{n}\chi^\top \chi \right)$
converge almost surely to $\frac{\Vert \nabla f(w) \Vert^2}{f(w)}=a(w)$ and $\lambda_{\text{min}}(\Sigma_X)$, respectively. This proves inequality (\ref{lambda}).}

\textcolor{black}{
To prove (\ref{prelem2}), observe that
\begin{equation*} \begin{split} 
\textnormal{Tr}\left((\sigma(w))^\top\sigma(w)\right)&=\textnormal{Tr}\left(\sigma(w)(\sigma(w))^\top \right)\\
&=\E \left[\Vert \nabla\ell(w,X) \Vert^2\right]-\Vert \nabla f(w) \Vert^2\\
&\leq \E \left[\Vert \nabla\ell(w,X) \Vert^2\right]= 4\E\big[ (\beta(w,X)-\beta(w^{\ast},X))^2  \Vert \nabla \beta(w,X) \Vert^2\big],
\end{split}
\end{equation*}
which implies the desired upper bound.}

\textcolor{black}{
We finally derive (\ref{prelem3}). Observe that
\begin{equation*} \begin{split}
Hf(w)&=\E[H \ell(w,X)]\\
&=2\E\left[\nabla \beta(w,X) (\nabla \beta(w,X))^\top+(\beta(w,X)-\beta(w^{\ast},X)) H \beta(w,X)\right].
\end{split}
\end{equation*}
Therefore,
$$\textnormal{Tr}\left((\sigma(w))^\top Hf(w)\sigma(w)\right)=2(I_1+I_2),$$
where
$$
I_1=\E \left[ \textnormal{Tr}\left((\sigma(w))^\top\nabla \beta(w,X) (\nabla \beta(w,X))^\top\sigma(w) \right)\right] 
$$
and
$$
I_2=\E \left[\textnormal{Tr}\left((\sigma(w))^\top(\beta(w,X)-\beta(w^{\ast},X)) H \beta(w,X) \sigma(w)\right)\right].
$$
We next bound $I_1$ and $I_2$ separately. On the one hand,
\begin{equation*} \begin{split}
I_1
&=\E \left[( \nabla \beta(w,X))^\top\sigma(w) (\sigma(w))^\top \nabla \beta(w,X) \right]\\
&\leq \lambda_{\max}\left(\sigma(w) (\sigma(w))^\top\right) \E\left[\Vert  \nabla \beta(w,X)\Vert^2\right].
\end{split}
\end{equation*}
On the other hand,
\begin{equation*} \begin{split}
I_2&=\E \left[(\beta(w,X)-\beta(w^{\ast},X)) \textnormal{Tr}\left((\sigma(w))^\top H \beta(w,X)\sigma(w) \right)\right] \\
&=\E\left[(\beta(w,X)-\beta(w^{\ast},X)) \textnormal{Tr}\left(H \beta(w,X) \sigma(w)(\sigma(w))^\top \right)\right]\\
&\leq D \E\left[(\beta(w,X)-\beta(w^{\ast},X))  \lambda_{\max}\left(H \beta(w,X) \sigma(w) (\sigma(w))^\top \right)\right]\\
&\leq D \lambda_{\max}\left(\sigma(w) (\sigma(w))^\top\right)\E \left[(\beta(w,X)-\beta(w^{\ast},X))  \lambda_{\max}(H \beta(w,X))\right] .
\end{split}
\end{equation*}
Therefore,
\begin{equation} \begin{split} \label{g2}
g(w) &\leq \frac{\lambda_{\max}\left(\sigma(w) (\sigma(w))^\top\right)}{f(w)} \bigg(\E\left[\Vert  \nabla \beta(w,X)\Vert^2\right] \\
&\qquad +D\E \left[(\beta(w,X)-\beta(w^{\ast},X))  
\lambda_{\max}(H \beta(w,X))\right]\bigg).
\end{split}
\end{equation}
We next bound $\lambda_{\max}\left(\sigma(w) (\sigma(w))^\top\right)$. We have
\begin{equation*} \begin{split}
&\lambda_{\max}\left(\sigma(w) (\sigma(w))^\top \right)= \text{sup}_{\xi \in \R^{d}, \Vert \xi \Vert=1}\Vert \sigma(w) (\sigma(w))^\top \xi\Vert  \\
&\qquad =\text{sup}_{\xi \in \R^{d}, \Vert \xi \Vert=1}\Vert\E\left[(\nabla \ell(w, Z)-\nabla f(w))(\nabla \ell(w, Z)-\nabla f(w))^\top \xi \right] \Vert \\
&\qquad \leq \E \left[\Vert \nabla \ell(w, Z)-\nabla f(w)\Vert^2 \right] \\
&\qquad \leq 2(\E \left[\Vert \nabla \ell(w, Z)\Vert^2 \right]+ \Vert \nabla f(w)\Vert^2).
\end{split}
\end{equation*}
Using the definition of $\E \left[\Vert \nabla \ell(w, Z)\Vert^2 \right]$ and applying Jensen's inequality to $\Vert \nabla f(w)\Vert^2$, we conclude that
\begin{equation*} \begin{split}
\frac{\lambda_{\max}\left(\sigma(w) (\sigma(w)\right)^\top)}{f(w)}\leq \frac{16\E\big[ (\beta(w,X)-\beta(w^{\ast},X))^2  \Vert \nabla \beta(w,X)\Vert^2\big]}{\E\big[ (\beta(w,X)-\beta(w^{\ast},X))^2\big]},
\end{split}
\end{equation*}
which together with (\ref{g2}) implies the desired upper bound.}
\end{proof}

\begin{proof}[Proof of Theorem \ref{nn1}]
\textcolor{black}{First observe that the function $\nabla f(w)$ is given by
\begin{equation*} 
 \nabla f(w)= 2\E\left[ (\beta(w,X)-\beta(w^{\ast},X)) \, \nabla \beta(w,X)\right],
\end{equation*}
and the matrix $\sigma(w)$ is given by the unique square root of the covariance matrix of
$$
\nabla\ell(w,X)=2(\beta(w,X)-\beta(w^{\ast},X)) \nabla \beta(w,X).
$$
Therefore, they are locally Lipschitz since they are continuous and differentiable with locally bounded derivatives. Hence, Assumption \ref{a1} holds. Moreover, by Assumption \ref{a3}, $f$ attains its minimum value and $\mathcal{S}$ defined in (\ref{snet}) is the set of minima of $f$, thus Assumption \ref{a1B} holds.}

\textcolor{black}{
Consider an initial condition $w_0=(W^0_1, b^0_1,,\ldots,W^0_L,b^0_L)\in \R^D$ satisfying Assumption \ref{a6}; that is, $W^0_1=0$, $b^0_{\ell}=0$ for all $\ell$, and the entries of $W_2^0,\ldots W_L^0$ are nonnegative. In particular, since $v_{\ell}(0)=0$ for all $\ell$ (Assumption \ref{a7}), we have $\beta(w_0,X)=0$. Since $\beta$ is continuous and $X$ is bounded by $K>0$ a.s. (Assumption \ref{a5}), this implies
\begin{equation} \label{pr1}
f(w_0)=\E \left[ (\beta(w^{\ast},X))^2\right] \leq \max_{x \in \R^d: \Vert x \Vert \leq K} (\beta(w^{\ast},x))^2.
\end{equation}
Let $\gamma>0$ be the minimum of all entries of $W^0_2, \ldots, W^0_{L-1}$ and let $M>0$ be the maximum of all entries of $W^0_2, \ldots, W^0_{L}$. Let $w=(W_1, b_1,\ldots,W_L, b_L)\in \R^D$ such that
$\Vert w -w_0 \Vert \leq \gamma/2$. Then the entries of $W_2, \ldots, W_{L-1}$ are all bounded from below by $\gamma/2$ and the entries of $W_2, \ldots, W_{L}$ are bounded from above by $M'=M+\gamma/2$.
Moreover, the absolute value of each entry of $W_1$ and each entry of each $b_{\ell}$ is bounded from above by $\gamma/2$. Let $x \in \R^d$ with $\Vert x \Vert \leq K$. Then the absolute value of each entry of each $g_1(x,w)$ is bounded from above by $a_1:=\gamma(K+1)$. 
Proceeding inductively as in \cite{C22}, we get that for each $\ell \geq 2$, the 
absolute value of each entry of each $g_{\ell}(x,w)$ is bounded from above by  
$$
a_{\ell}:=\varphi_{\ell-1}(\varphi_{\ell-2}\cdots(\varphi_2(\varphi_1(a_1)M'd_1+\gamma)M' d_2 +\gamma)+\cdots)M'd_{\ell-1}+\gamma,
$$
where $\varphi_{\ell}(y):=\max\{v_{\ell}(y), \vert v_{\ell}(-y) \vert\}$.
Thus, each diagonal entry of each $D_{\ell}(x,w)$ is bounded from below by
$$
c_{\ell}:=\min_{\vert y \vert \leq a_{\ell}} v'_{\ell}(y)>0.
$$
Now let $N>\gamma/2$ be a lower bound on the entries of $W_L^0$.
Then the entries of $W_L$ are bounded from below by $N-\gamma/2$, and hence, for each $i \in \{1,\ldots,d_1\}$,
$$
\min_{x \in \R^d: \Vert x \Vert \leq K} (q_i(w,x))^2 
\geq \left( (N-\gamma/2) (\gamma/2)^{L-2} d_{L-1} \cdots d_2 c_{L-1} \cdots c_1 \right)^2.
$$
The lower bound in equation (\ref{lambda}) gives
\begin{equation}  \label{pr2}
A_{\min}(\gamma/2,w_0)\geq 4  \lambda_{\min}(\Sigma_X) d_1 \left( (N-\gamma/2) (\gamma/2)^{L-2} d_{L-1} \cdots d_2 c_{L-1} \cdots c_1 \right)^2.
\end{equation}}

\textcolor{black}{
On the other hand, since $\beta$ is a $\mathcal{C}^2(\R^D \times \R^d)$ function, the upper bound in equation (\ref{prelem3}) shows that 
$
G_{\max}(\gamma/2,w_0) 
$
is bounded from above by a constant that depends only on $M'$, $K$, $\gamma$, and $v_1,\ldots, v_L$.
Thus, we conclude that there exists $\eta>0$ satisfying assumption (\ref{etay}).}

\textcolor{black}{
Choose $\eta<\frac{A_{\min}(\gamma/2,w_0)}{2G_{\max}(\gamma/2,w_0)}$. 
In particular, such a choice of $\eta$ implies 
\begin{equation} \label{pr3}
\theta(\gamma/2,w_0,\eta)>A_{\min}(\gamma/2,w_0)/2.
\end{equation}
On the other hand, similarly as above, the upper bound in equation (\ref{eq:B}) shows that $B_{\max}(\gamma/2,w_0)$ is bounded from above by a constant that depends only on $M'$, $K$, $\gamma$, and $v_1,\ldots, v_L$. Therefore, we can choose $\eta$ sufficiently small so that
\begin{equation} \label{pr4}
\eta B_{\max}(\gamma/2,w_0)\leq \frac{1}{4},
\end{equation}
which is precisely condition (\ref{bmax}).}

\textcolor{black}{
Using (\ref{pr1}), (\ref{pr2}), (\ref{pr3}), and (\ref{pr4}), we get that $p$ defined in assumption (\ref{heta}) satisfies
\begin{equation*} \begin{split}
p &\leq \frac{2\max_{\Vert  x \Vert  \leq K} \vert  \beta(w^{\ast},x) \vert}{\gamma/2 \sqrt{A_{\min}(\gamma/2,w_0)/2}}(1+1+\frac12) \\
&\leq \frac{8 \max_{\Vert  x \Vert  \leq K} \vert  \beta(w^{\ast},x) \vert}{ \gamma \sqrt{\lambda_{\min}(\Sigma_X) d_1} (N-\gamma/2) (\gamma/2)^{L-2} d_{L-1} \cdots d_2 c_{L-1} \cdots c_1}.
\end{split}
\end{equation*}
Therefore, taking $N$ sufficiently large, condition (\ref{heta}) holds, and since $p$ can be made sufficiently small, applying Theorem \ref{nn1} with $r=\gamma/2$ completes the proof of the first two statements of the theorem.}

\textcolor{black}{
Finally, to show \eqref{ratebis}, note that inequalities (\ref{rate}) and (\ref{pr3}) imply that
for all $\epsilon>0$ and $t>0$,
\begin{equation*} 
\P \left( \Vert w_{t}-x^{\ast} \Vert >\epsilon \vert \tau_{\gamma/2} \wedge \tau =\infty\right) 
\leq \frac{\gamma}{2\epsilon} e^{- \lambda_{\min}(\Sigma_X) d_1 \left( (N-\gamma/2) (\gamma/2)^{L-2} d_{L-1} \cdots d_2 c_{L-1} \cdots c_1 \right)^2 t},
\end{equation*}
which proves \eqref{ratebis} and concludes the proof of the theorem.}
\end{proof}

\acks{\textcolor{black}{We thank two anonymous reviewers for their insightful remarks that led to significant improvements.}
Both authors acknowledge support from the Spanish MINECO grant PID2022-138268NB-100 and
  Ayudas Fundacion BBVA a Proyectos de Investigaci\'on Cient\'ifica 2021.}


\bibliography{sgd}

\end{document}